\DeclareMathOperator*{\argmax}{arg\,max}
\DeclareMathOperator*{\argmin}{arg\,min}
\newtheorem{prop}{Proposition}
\newtheorem{example}{Example}
\definecolor{darkblue}{rgb}{0, 0, 0.5}
\title{Learning Explainable Dense Reward Shapes via Bayesian Optimization}
\author{Ryan Koo\textsuperscript{1}, Ian Yang\textsuperscript{2}, Vipul Raheja\textsuperscript{3}, Mingyi Hong\textsuperscript{1}, Kwang-Sung Jun\textsuperscript{4}, Dongyeop Kang\textsuperscript{1}  \\
University of Minnesota\textsuperscript{1}, Georgia Institute of Technology\textsuperscript{2}, Grammarly\textsuperscript{3}, University of Arizona\textsuperscript{4} \\
\texttt{\{koo00017, mhong, dongyeop\}@umn.edu}, \texttt{iyang30@gatech.edu}, \texttt{vipul.raheja@grammarly.com}, \\\texttt{kjun@cs.arizona.edu}
}
\begin{document}

\ifcolmsubmission
\linenumbers
\fi

\maketitle

% Existing works have formulated the problem as a credit assignment one, investigating various methods to transform the scalar sparse reward into a dense one to retrieve token-level information. 

\begin{abstract}
Current reinforcement learning from human feedback (RLHF) pipelines for large language model (LLM) alignment typically assign scalar rewards to sequences, using the final token as a surrogate indicator for the quality of the entire sequence. However, this leads to sparse feedback and suboptimal token-level credit assignment.
In this work, we frame reward shaping as an optimization problem focused on token-level credit assignment. We propose a reward-shaping function leveraging explainability methods such as SHAP and LIME to estimate per-token rewards from the reward model. To learn parameters of this shaping function, we employ a bilevel optimization framework that integrates Bayesian Optimization and policy training to handle noise from the token reward estimates.
Our experiments show that achieving a better balance of token-level reward attribution leads to performance improvements over baselines on downstream tasks and finds an optimal policy faster during training. Furthermore, we show theoretically that explainability methods that are feature additive attribution functions maintain the optimal policy as the original reward.
The code is publicly available.\footnote{\url{https://github.com/minnesotanlp/explainable-dense-rewards}}
% The idea is how we can leverage reward shaping to improve online reinforcement learning algorithms 
\end{abstract}

\section{Introduction}
One of the fundamental challenges in reinforcement learning (RL) arises from the sparsity of rewards, where feedback signals are typically only provided at the end of a trajectory, with little to no evaluative information about the intermediate states. 
% this is generally known in RL community and i think need not ciation
As a result, this limitation leads to gaps about the favorability of certain intermediate states, as the agent lacks the required granular feedback regarding which actions were beneficial to the outcome. A similar challenge has been observed in many recent applications of reinforcement learning from human feedback (RLHF), where sparse rewards are common \citep{zheng2023secretsrlhflargelanguage, chaudhari2024rlhfdecipheredcriticalanalysis}.  

Given the sequential nature of language models and the token-level value function typically optimized in RLHF \citep{zhong2025dpomeetspporeinforced, rafailov2024rqlanguagemodel}, it is crucial to determine the contribution of individual tokens to the overall reward. The challenge lies in the fact that the reward is often awarded at the end of sequence generation and represents the quality of the entire sequence as a scalar, which is known to be unstable \citep{razin2024vanishinggradientsreinforcementfinetuning, engstrom2020implementationmattersdeeppolicy} and encodes a low-bandwidth signal that does not help determine the relative quality of intermediate tokens. Hence, to address this limitation, it can be beneficial to assign token-level rewards for fine-grained feedback to the policy \citep{wu2023finegrainedhumanfeedbackgives, xie2024text2rewardrewardshapinglanguage}.

However, assigning scores to tokens as they get autoregressively generated is computationally intensive, and collecting a large set of fine-grained human annotations for a supervised learning setup is expensive and subject to high disagreement. Thus, some works have explored "reward shaping" techniques to transform sparse rewards into \textit{dense rewards} \citep{Sutton1998, ng1999-policy-invariance}, thus facilitating more efficient and interpretable optimization, as well as providing finer-grained control over intermediate decisions. For instance, recent attempts examine \textit{process rewards} \citep{lightman2023letsverifystepstep, uesato2022solvingmathwordproblems}, which provide intermediate feedback for chain-of-thought generations or directly utilize the attention map \citep{chan2024denserewardfreereinforcement} of the reward model to redistribute the reward. However, these steps are complex and often require high-quality human feedback, and the attention on each token may not be directly correlated to its output as an explanation \citep{jain2019attentionexplanation}. 

In this work, we propose a two-part approach to densify sparse rewards: (1) using explainability methods to construct a dense reward signal, and (2) using Bayesian optimization to learn the weights for a new reward shaping function composed of the explainability scores. We frame reward shaping as a bi-level optimization problem: at the higher level, we optimize the coefficients of the shaped reward function, while at the lower level, we learn the corresponding optimal policy, as illustrated in Figure \ref{fig:method}. First, we estimate token-level contributions using explainability techniques such as LIME \citep{ribeiro2016whyitrustyou} or SHAPley values \citep{lundberg2017shapley}. Next, since these explainability methods are known to be sensitive to noise \citep{li2020-shap-uncertainty}, we treat them as uncertain estimates and seek to learn an optimal weighting scheme over them. However, due to the complexity of the reward landscape, exhaustively evaluating all possible weight configurations is computationally infeasible. To address this, we treat the problem as black-box optimization and employ Bayesian optimization to learn the best reward coefficients as a natural method robust to noisy objective functions \citep{fröhlich2020noisyinputentropysearchefficient, daulton2022robustmultiobjectivebayesianoptimization}.

We demonstrate that explainability offers a natural way to extract more information from the reward model to densify rewards and satisfies as \textit{potential-based} reward shaping \citep{ng1999-policy-invariance} so that we satisfy \textit{policy invariance} or that the optimal policy with the original reward function remains unchanged. Furthermore, we introduce a new optimization setup incorporating different sources of token-level information and show that Bayesian Optimization can aid in learning the best reward-shaping function. Empirically, we show that explainability methods positively impact the RL training compared to sparse rewards through accelerating learning and more stable updates in the value function. We also show that adding Bayesian Optimization to properly shape rewards improves generation quality on downstream tasks compared to its naive setups. 

\begin{figure*}[t]
    \centering
    \includegraphics[width=\linewidth
        , trim={1.0cm 12cm 1cm 8.5cm}, clip
        ]{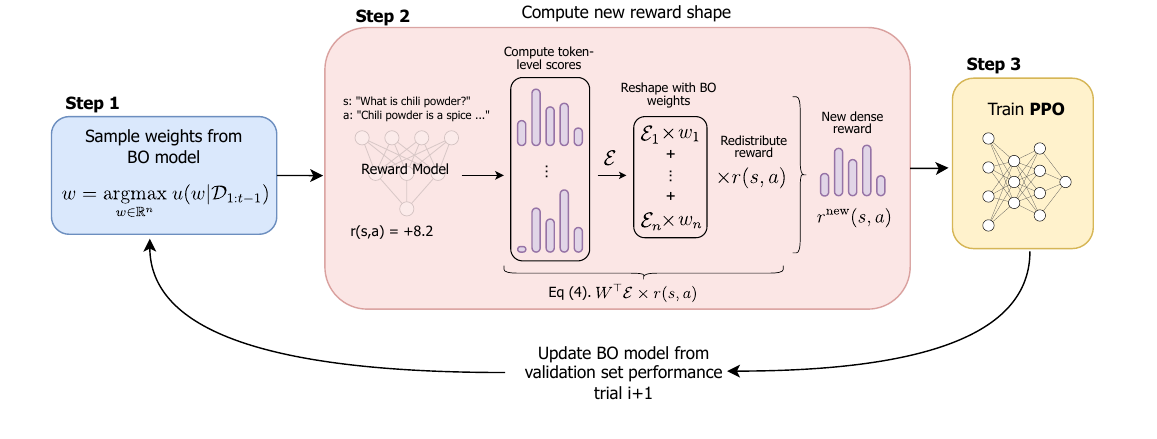}
    \vspace{-0.75cm}
    \caption{Overview of the bilevel optimization setup to find the best reward shape and the optimal policy. The pipeline involves an outer and inner training loop, where the outer step optimizes the Bayesian optimization model and samples the weights for our reward shape. The inner step optimizes the classic RLHF objective.}
    \label{fig:method}
\end{figure*}

\section{Preliminaries}

In this section, we first introduce the classic RLHF approach and build the relevant background for explainability and Bayesian optimization for our method. 

\subsection{Token-level MDP for RLHF}
We cast RL for language modeling in the sequential decision-making setting as a Markov Decision Process (MDP) by tuple $\mathcal{M} = (\mathcal{S}, \mathcal{A}, \mathcal{P}, \gamma, r)$. Here the \emph{state space} $\mathcal{S}$ contains states $s_t=\{x,\,y_{0:t}\}$, where $x$ denotes the input prompt and $y_{0:t}$ the sequence produced up to token $t$. The \emph{action set} $\mathcal{A}$ consists of next‑token choices $a_t$ sampled from the model’s token distribution. Transition dynamics are governed by the kernel $\mathcal{P}$, which assigns a probability 
$P(s' \mid s,a)$ to moving from state $s$ to $s'$ after taking action $a$. A scalar \emph{discount factor} $\gamma\in[0,1)$ down‑weights future returns, and the $r(s_t,a_t)$ is the reward.

In the RLHF pipeline, a model is first trained over a preference dataset $\mathcal{D} = \{(x, y_c, y_r)\}$ consisting of prompts $x$ and a pair of chosen and rejected responses $y_c, y_r$ via supervised fine-tuning (SFT) followed by a two-stage process (1) reward modeling and (2) RL training. From the SFT model, the reward function is modeled under a preference model encoded over the \textbf{sentence level} and trained as a binary classification problem via maximum likelihood: 

\begin{equation*}
    P(y_c \succ y_r) = \frac{\exp \{ r(x, y_c) \}}{{\exp \{r(x, y_c) \}} + {\exp \{r(x,y_r) \}}}
\end{equation*}

After learning the reward model, a language model is usually trained to optimize the reward via a policy gradient algorithm such as PPO \citep{schulman2017proximalpolicyoptimizationalgorithms} with KL-regularization to maximize the expected accumulated rewards. Here, since we optimize over a finite horizon, we set $\gamma = 1$ and consider the undiscounted return:
\begin{equation*}
    J^*(\pi_\theta) = \max_{\theta} \mathbb{E}_{a_t \sim\pi_\theta} \left [\sum_{t=0}^T r(s_t, a_t) - \beta \log \frac{\pi_\theta(a_t| s_t)}{\pi_{\text{ref}}(a_t| s_t)} \Bigg| s_0 \right ]
\end{equation*}

where $\pi_\theta$ is the language model agent and $\pi_{\text{ref}}$ is a reference policy, typically the SFT model.
% In particular, the reward is usually assigned on the terminal state  $(s_t, a_t)$ (e.g., \texttt{<EOS>}) to represent the reward over the whole sequence in a bandit formulation. 
In practice\footnote{We use the TRL \citep{vonwerra2022trl} implementation of PPO for our method}, the policy gradient optimizes a token-level value function given by rewards:

\begin{equation*}
    r(s_t, a_t) = \begin{cases}
        r(x,y) - \beta \log \frac{\pi_\theta(a_t | s_t)}{\pi_{\text{ref}}(a_t| s_t)} & \text{if } y \text{ is terminal}
        \\ - \beta \log \frac{\pi_\theta(a_t | s_t)}{\pi_{\text{ref}}(a_t | s_t)} & \text{ otherwise }
    \end{cases}
\end{equation*}

As a result, the signal received is very sparse \citep{zhong2025dpomeetspporeinforced, chan2024denserewardfreereinforcement} in which the model receives no relevant feedback about intermediate token generations beyond the KL-regularized term. Thus, it is of interest to understand how we can design rewards on the token level to provide quality, dense signals to make learning easier and potentially improve performance. 
% We define policy optimization as the bottom-level optimization step for our method.

% Many recent works have discussed various alignment techniques for best learning the human preference distribution, including RL \citep{wu2023fine, ramamurthy2023reinforcementlearningnotnatural, stiennon2022learningsummarizehumanfeedback, ziegler2020finetuninglanguagemodelshuman} and non-RL methods \citep{rafailov2024directpreferenceoptimizationlanguage, ethayarajh2024ktomodelalignmentprospect, azar2023generaltheoreticalparadigmunderstand, hong2024orpomonolithicpreferenceoptimization}. 
% \citep{chan2024denserewardfreereinforcement, zhang2024reinforcementlearninghumanfeedback, zhong2024dpomeetspporeinforced}`

% In the RLHF pipeline, a model is first trained over a preference dataset $\mathcal{D} = \{(x, y_c, y_r)\}$ followed by a two-stage process of reward model training and RL training. In this paper, we focus on the reward modeling stage of the pipeline. In particular, the reward is assigned on the terminating token \texttt{<EOS>} or the last token to represent the reward over the whole sequence, where all other tokens usually only receive the KL score from Eq. 3. However, this is very sparse \citep{zhong2024dpomeetspporeinforced, chan2024denserewardfreereinforcement}. 

% Under the MDP setting, the final reward received is "delayed" in which we have no information about intermediate trajectories. 

\subsection{Explainability as Token-level Rewards}

%%% more introduction about shapley values
We first introduce a general definition for estimating token-level information with explainability in the textual environment. To explain the predictions of a complex black-box function $f$ (i.e., our reward model), often a small linear model $g$ is used to locally approximate it concerning an input $x$ of the prompt and completion. Here, we consider only the family of \textit{additive feature attribution} functions  \citep{ribeiro2016whyitrustyou, lundberg2017shapley} such that the optimal policy with the original reward remains unchanged \citep{ng1999-policy-invariance}:
\begin{equation}
    g(z') = \phi_0 + \sum_{i=1}^M \phi_i z'_i
\end{equation}
In the text domain, we define $z'_i \in \{0, 1\}^M$ as a binary mask that masks the $i$-th token from the original input $x$ where $M$ is the number of tokens in the sequence such that $g(z')\approx f(h_x(z'))$ and $h_x$ is mapping a function that reconstructs $z'$ into a valid text input. The value of interest for each token is $\phi_i$, representing the feature effect or their marginal contribution to the prediction $f(x)$. To solve for the explainability score $\phi_i$ for each token, we optimize the local accuracy to sample $x$ by minimizing the penalized linear regression problem: 
% {\color{red}[what does 'local' mean here?][What's the function L here?]}:

\begin{equation}
   \mathcal{E} = \argmin_{g \in G} \mathcal{L}(f, g, \pi_{x}) + \Omega(g)
\end{equation}

where $G$ defines the set of local functions (i.e., small linear model), $\pi_{x}$ is a kernel transformation to the input (i.e., the SHAPley kernel in Eq. \ref{appendix:shap-kernel}), and $\Omega$ is a complexity penalty of $g$. Solving the regression problem gives us $\mathcal{E}$ or the set of explanation scores per token $\phi_i$ for a given sequence $x$. The form of Eq. 2 is general for any method of additive feature attribution functions, and we see later that formulating this as a reward-shaping function optimizes for the same policy as the original sparse reward objective in Section \textbf{3.1}. We detail a complete example of solving the explainability objective in Appendix \ref{appendix:a} to produce token-level scores.

% \begin{figure}
% \centering     %%% not \center
% \subfigure[Redistribution sequence after computing the scalar reward prediction and also computing the explanation feature attributions. ]{\label{fig:a}\includegraphics[width=0.49\linewidth
%                 , trim={5.0cm 5.0cm 10.0cm 5.0cm}
%                 , clip
%             ]{figures/reward_distribution.pdf}}
% \hfill
% \subfigure[A darker red highlights a much stronger positive contribution while a deeper blue indicates more negative contribution. The bold number is computed by the sum of all positive scores]{\label{fig:b}\includegraphics[width=0.49\linewidth
%                 , trim={0.75cm 11.0cm 0.75cm 2.75cm}
%                 , clip
%             ]{figures/shap_reward_breakdown.pdf}}
% \caption{Reward distribution and SHAPley visualization for the marginal 
%             contribution of each token to the reward model prediction in human-assistant dialogue.}
% \end{figure}

\subsection{Bayesian Optimization for Reward Shaping} % or transformations
To optimize the token-level weights in our reward shaping function, we employ Bayesian Optimization (BO) as a zeroth-order method to efficiently search for high-reward configurations. BO aims to find the global maximizer $x^* = \arg\max_x f(x)$ of a black-box function $f$, which in our case represents the scalar reward produced by a shaped reward model under a given weight configuration. Since $f$ is expensive to evaluate, BO uses a surrogate model—typically a Gaussian Process (GP)—to approximate $f$ from past observations and suggest new candidate points.

In this work, it is sufficient to understand the GP as a distribution over a collection of function outputs mapping the observed function values $y = f(x)$ (here, the reward prediction) to a multivariate Normal: $f(x) \sim \mathcal{N}(\mu(x), \Sigma(x))$, where $\mu(x) = \mathbb{E}[y |x]$ is the mean function, and $\Sigma(x) = \text{Cov}[y|x]$ is the covariance function. To update the GP model, we can combine the observed values with the prior distribution to update the belief about $f$ and generate a posterior distribution, which we continue to do iteratively. 

After building the surrogate model, BO samples points by maximizing the expected utility\footnote{Formally, it is usually also referred to as the "reward" but we separate definitions to avoid confusion from RL rewards.} based on the current posterior under some acquisition function to sample areas of improvement and balance between exploration and exploitation to choose the next point. At each BO step, we use the surrogate to select a new candidate weight vector by maximizing an acquisition function $u$, which quantifies the expected utility (i.e., predicted improvement in validation reward) of sampling a new point: \begin{equation} 
w = \arg\max_{w} u(w \mid \mathcal{D}_{1:t-1}) \end{equation} where $\mathcal{D}_{1:t-1}$ is accumulated observations from the objective function. For $u$, we adopt log Noisy Expected Improvement \citep{ament2025unexpectedimprovementsexpectedimprovement}, which handles noisy evaluations well and is easy to optimize numerically. We implement BO using the \texttt{Ax} API \citep{bakshy2018}, which manages the optimization loop for the surrogate GP and acquisition functions.
\section{Explainable Reward Shaping as a Bilevel Optimization}
In this section, we discuss the optimization setup and the preliminary algorithm to find the best reward shape from each token-level score. First, we discuss the formulation of reward shape to compute a dense reward from locally approximating token-level rewards via explainability methods and obtain more fine-grained feedback. Next, we discuss the bilevel optimization problem, where we model BO and the classic RLHF problem as a nested objective to optimize both the reward shape and the policy. 
% Additionally, we introduce another approach to our method that incorporates contextual information to learn higher-dimensional weights to dynamically shape rewards instead of learning a static set of weights. 

\subsection{Computing the Reward Shape}

At each step of the optimization procedure, after sampling the weights $w$ from the upper-level step, we shape the reward by computing a weighted linear combination of all token-level scores and broadcast the reward over the sequence:
% {\color{red}[I don't understand the second equality. $[\mathcal{E}_{\text{SHAP}} \cdot r(s, a) , ..., r(s,a)]$ means a vector with all entries the same except the first one which is $\mathcal{E}_{\text{SHAP}} \cdot r(s, a)$? ][also what does $\odot$ mean here?]}: 
\begin{equation}\label{eq:shaping}
\begin{aligned}
    r'(s, a) &= W^\top \boldsymbol{\mathcal{E}} \cdot r(s,a) = w_1 \cdot \mathcal{E}_{\text{SHAP}} \cdot r(s, a) + w_2 \cdot r(s,a)
\end{aligned}
\end{equation}
where $\boldsymbol{\mathcal{E}} = [\mathcal{E}_{\text{SHAP}}, 1]$ is the matrix of token-level explanations (not limited to just SHAP); $W$ is the vector of weights sampled from the Bayesian optimization step. Importantly, we sample each $w_i$ from the BO model such that $\sum_{i} w_i = 1$ to consider the convex combination of all token-level scores over the reward.
% For the rest of the paper, we also represent the scalar reward as a token-level vector $[0, 0, \dots, 0 , 1]$, which is included in the matrix $\boldsymbol{\mathcal{E}}$ as an identity {\red[not sure about the last sentence. the vector is included in matrix $\boldsymbol{\mathcal{E}}$? what does it mean?]}. 
In practice, we compute the dense reward by first taking the softmax over the token-level scores to get a probability vector over the sequence. Then, following \cite{chan2024denserewardfreereinforcement}, we distribute the sparse reward or assign credit to each token by broadcasting it over the sequence. As a result, the scores at intermediate states are assigned to provide fine-grained feedback for the next token generated transitioning from state $s \to s'$.

% Additionally, we also consider incorporating contextual weights, which take the word embeddings for each generation $y$ and take the dot product between the word embeddings and a higher-dimensional weight vector to compute $w$. To compute the contextual weights, we first reduce the dimensionality of the embeddings via PCA. For instance, the dimensionality of word embeddings for the \textsc{LlaMA} model is 4096. To effectively sample weights via Bayesian optimization, we need to reduce the search space dimension as optimization becomes much harder or almost intractable as dimensionality increases. We first generate a dataset of completions to get the word embeddings and then fit a PCA model to project any future completions to lower dimensionality, such as 10 dimensions. During Bayesian optimization, the weights are now sampled as a vector $[u] = [u_1, u_2, ..., u_n]$ and the weights for the reward shape are computed by taking the dot product of the mean of the word embeddings in a completion $y$ and the weight vector (plus a bias): 
% \begin{equation}
%     w_i = \text{embed}(y)_{1 \times n} \cdot [u]_{n \times 1} + b_i
% \end{equation}
% where $n$ is the reduced dimension size, and $w$ should still be a scalar value to utilize in the dense reward. Theoretically, if the context provides no benefit, the bias should provide the same solution as the naive statically weighted reward shape. We provide more details on the exact calculation and an example in Appendix \ref{}.

Additionally, we show that any explainability method in the family of additive feature attribution functions follows policy invariance and can be formulated as a potential-based shaping function \citep{ng1999-policy-invariance}. This ensures that adding a reward for transitions between states does not yield a suboptimal policy but better guides the learning agent towards the same optimal policy. We detail a complete proof in Appendix \ref{appendix:policy-invariance}.

% \begin{figure}
%     \centering
%     \includegraphics[width=0.85\linewidth
%         , trim={4.5cm 5cm 4.5cm 3.5cm}, clip
%         ]{figures/distribute_reward.pdf}
%     \caption{Redistribution sequence of the scalar reward prediction over the explanation feature attributions after softmax normalization. A darker red highlights a much stronger positive contribution, while a deeper blue indicates a more negative contribution.\vspace{-0.2cm}}
%     \label{fig:reward_redistribute}
% \end{figure}

\begin{figure}
    \centering
    \includegraphics[width=0.85\linewidth
        , trim={4.5cm 3.75cm 4.5cm 2.75cm}, clip
        ]{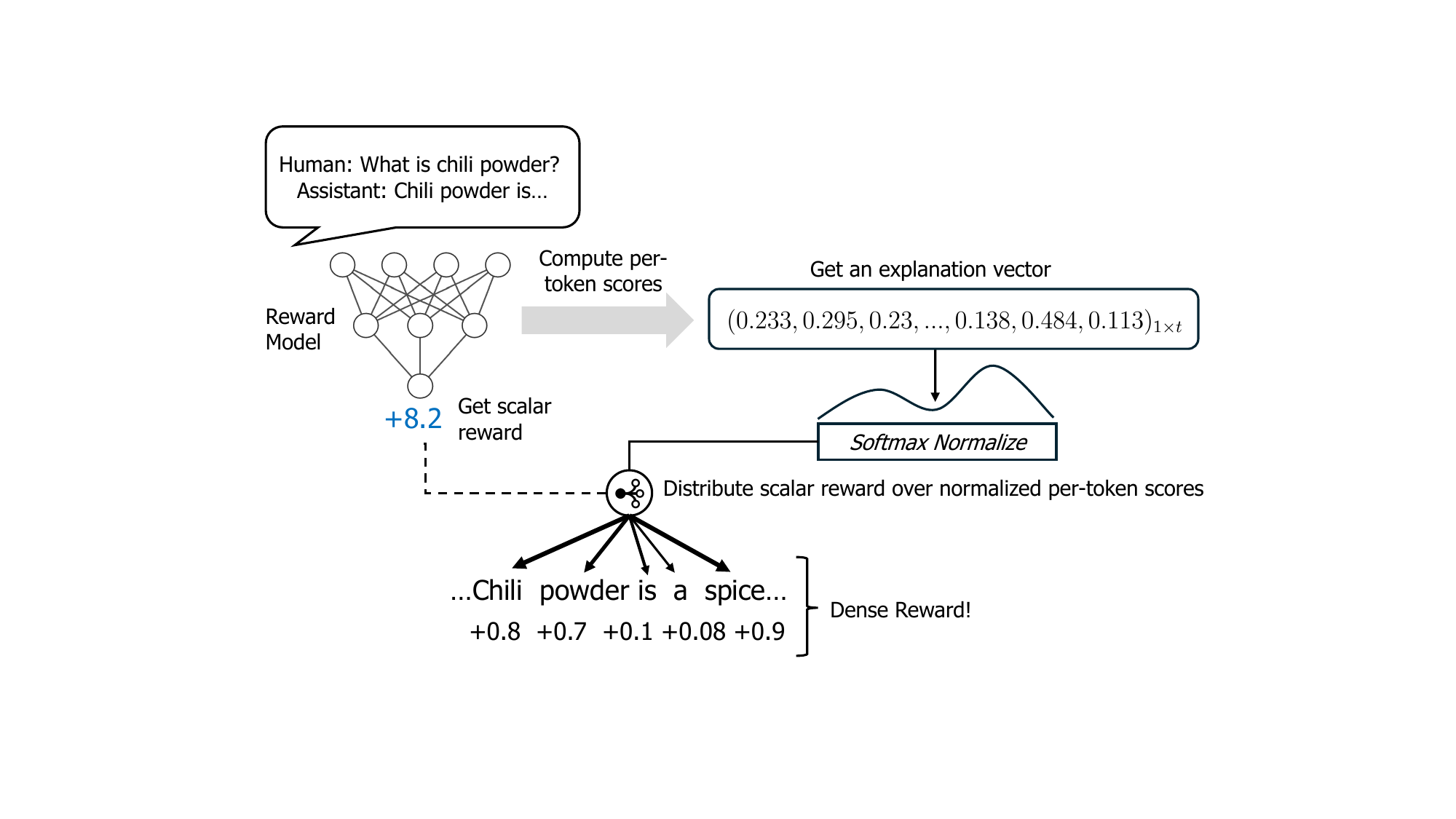}
    \caption{Redistribution sequence of the scalar reward prediction over the explanation feature attributions after softmax normalization. A darker red highlights a much stronger positive contribution, while a deeper blue indicates a more negative contribution.\vspace{-0.2cm}}
    \label{fig:reward_redistribute}
    % \vspace{-0.5cm}
\end{figure}

\subsection{BO and RLHF as Nested Problems}

Conventionally, the lower level is an auxiliary problem that helps solve the upper-level problem \citep{zhang2023introductionbileveloptimizationfoundations}.In contrast, here, we are equally interested in the solution to the lower-level problem as a result of the solution retrieved from the upper-level one. In other words, we have the following bilevel optimization setup:
\begin{equation}
\begin{gathered}
    \max_{w} f(w; \pi_\theta^*(w)) \quad 
    \text{s.t.} \quad 
    \pi_\theta^*(w) = \argmax_{\pi_\theta \in \Pi} 
    \mathbb{E}_{a \sim\pi_\theta} \left [r'(s,a) - \beta \log \frac{\pi_\theta(a| s)}{\pi_{\text{ref}}(a| s)} \right ]
\end{gathered}
\end{equation}
where we have the upper-level optimization problem maximizing the acquisition function $f$ in each BO iteration to sample weights $w$ based on the validation set performance (i.e., average validation reward) of the best-performing policy $\pi^*_\theta$. In the lower-level optimization step, we have the standard RLHF problem to find the best policy $\pi_\theta^*$ given the weights for reward shaping from the upper-level step. In principle, our optimization procedure can be more concretely understood as a hyperparameter tuning problem, as we do not explicitly parameterize the reward transformation, but where the outer step looks to find the best set of weights $w$ (or hyperparameters) to compute the reward shape for the optimal policy. We outline a practical implementation of our method in Algorithm \ref{alg:bo-ppo}. 

% The proof of lower-bound performance is simple. Assuming we run enough trials to eventually sample the best weights, the performance of the final reward shaping function is lower bound by any single heuristic. Consider the case of two heuristic functions. Let $\pi_{H_i}$ define the performance of the policy on just the heuristics as the reward. Then clearly $\pi_{H_1}, \pi_{H_2} \leq \pi_{R'}$ if we have weights consider only $H_1$ or $H_2$ or $R'$ composed of both performs better. 

\begin{table*}[t!]
\small
\centering
\begin{tabular}{l c cc ccc}
\toprule
  \multicolumn{1}{l}{Models} & \# & \multicolumn{2}{c}{HH-RLHF}  & \multicolumn{3}{c}{Ultrafeedback} \\
\cmidrule(lr){3-4} \cmidrule(lr){5-7}
& & Score & Win (\%) & Score & Win (\%) & MTBench \\
\midrule
\textbf{Baselines}\\
\textrightarrow  SFT      &   -   &        5.40      &     --      &  4.34            &          --            &           7.2   \\
\textrightarrow  RLHF with Sparse (original)      &   - &      5.94        &       48.7    &    4.35          &        57.64           &        7.17      \\
\midrule
\textbf{Dense RLHF}\\
\textrightarrow  Attention \citep{chan2024denserewardfreereinforcement}   &  1 &      6.24        &     51.40      &       4.42       &      \underline{61.40}      &        7.28      \\
\textrightarrow  SHAP*    & 1&        5.91      &     53.95      &     4.46          &          53.81   &     \textbf{ 7.81 }       \\
\textrightarrow  LIME*   &  1 &      5.86        &  54.36         &    4.46          &       59.20     &      7.28      \\
\midrule
\textbf{Dense RLHF with BO}\\
\textrightarrow BO-SHAP & 1 &    5.99        &    55.75      &   4.37     &     60.82       &           7.37   \\ 
\textrightarrow BO-LIME  & 1 &    6.26         &      \textbf{57.62}    &      4.37        &      56.73       &      7.59        \\
\noalign{\vskip 0.5ex}
\cdashline{1-7}
\noalign{\vskip 0.5ex}
\textrightarrow BO-SHAP-Attn  & 2 &   \underline{6.47}         &      \underline{56.90}      &     \underline{4.48}        &    51.08        &      \underline{7.71}        \\
\textrightarrow BO-LIME-Attn  &  2 &    6.30      &   54.09   &       4.45       &     55.03       &       7.03       \\
\textrightarrow BO-SHAP-LIME & 2 &      \textbf{ 6.58}     &      56.28      &        \textbf{4.51}     &    \textbf{64.82}        &     7.61      \\
\noalign{\vskip 0.5ex}
\cdashline{1-7}
\noalign{\vskip 0.5ex}
\textrightarrow BO-SHAP-LIME-Attn  & 3 &     5.74        &    51.00      &      4.44        &    59.64        &        7.38      \\
% CBO-SHAP (Ours) &              &            &              &            &              \\
% CBO-SHAP-LIME (Ours) &              &            &              &            &              \\
% CBO-SHAP-LIME-Attn &              &            &              &            &              \\
\bottomrule 
\end{tabular}
\caption{Comparison of HH-RLHF  (Only Helpful) and Ultrafeedback. Winrate is calculated via AlpacaEval, with the reference model being the SFT model's generation. 
* indicates strong baselines that we have implemented. 
\# means the number of dense reward types used.
Score refers to the average test set reward and Win 
(\%) refers to the length-controlled win-rate against the baseline SFT generations. We use \texttt{gpt4-turbo} as the judge for both AlpacaEval2 and MTBench. Bolded and underlined numbers highlight the best performance.}
\label{tab:main_results}
\vspace{-0.4cm}
\end{table*}

\section{Experiments}
\textbf{Tasks}.
In this section, we empirically verify the effectiveness of our method on downstream tasks. Namely, we train each model on two different \textbf{single-turn dialogue} datasets: 1) HH-RLHF (helpfulness) \citep{bai2022traininghelpfulharmlessassistant} and 2) Ultrafeedback \citep{cui2024ultrafeedbackboostinglanguagemodels} datasets. For HH-RLHF, we utilize the OpenLLaMA family of models with an instruction-tuned 7B model as the SFT model and the 3B parameter reward model from \cite{dong2023raftrewardrankedfinetuning} following the setup in \citep{chan2024denserewardfreereinforcement}. For Ultrafeedback experiments, we utilize \textsc{LLaMa}-3.2-Instruct 3B \citep{grattafiori2024llama3herdmodels} as the SFT model and fine-tune \textsc{LLaMa}-3.2-Instruct 1B on  Ultrafeedback preferences as the reward model. 

To verify the effectiveness of each method, we consider first the average holistic reward predicted by the reward model over the test splits for each dataset. We also evaluate on open benchmarks such as AlpacaEval-2 \citep{dubois2025lengthcontrolledalpacaevalsimpleway} and MTBench \citep{zheng2023judgingllmasajudgemtbenchchatbot} to examine whether our methods lead to a better local optimum versus optimizing with only the sparse reward while also avoiding reward overfitting.

\textbf{Baselines} To measure the improvement of our method, we consider several different baselines: 1) the SFT model to calibrate improvements, 2) RLHF on the sparse reward, 3) attention-based credit \citep{chan2024denserewardfreereinforcement}, and 4) each individual explanation method before optimizing the reward shape with Bayesian Optimization. 

\textbf{Training setup} For every experiment, we adopt PPO as our method's principle policy gradient algorithm. Due to the computational complexity of PPO and to run a sufficient number of Bayesian trials, we must compromise training samples per trial to avoid inducing too much computation overhead. Hence, instead of running over the whole dataset for each BO trial, we run $m = 25$ trials for the Bayesian Optimization step and batch each trial into a small number of training epochs (i.e., 10) of batch size 8 (80 samples) for a total of 2000 samples. To initially sample weights for reward shaping, we employ Sobol sampling for the first five trials to build a prior for the Gaussian Process model and then sample from the GP model for the remaining trials.

After each BO iteration, we evaluate the trained model on the validation split \footnote{We construct the validation set by splitting the training dataset 90\%/10\%.} and take the average validation reward as the utility to update the BO surrogate function. To continue training, we employ model checkpointing to resume training from the best checkpoint instead of running a complete training cycle from scratch. Thus, we run at most only twice over the dataset $\mathcal{D}_{1:n}$ (once for BO, once for full PPO). To ensure randomness, we randomly sample a subset from the training dataset at each Bayesian trial to train PPO. We also randomly sample a subset from the validation set at each validation step. We detail our model setups and hyperparameters and discuss the computational complexity in  Appendix \ref{appendix:wall-time} and \ref{appendix:impl-algorithm}. 

\begin{figure}[t]
\centering
\subfigure[SHAP vs. BO variants]
{\label{fig:4a}\includegraphics[width=0.3\linewidth
                , trim={0.0cm 0.0cm 0.0cm 0.0cm}
                , clip
            ]{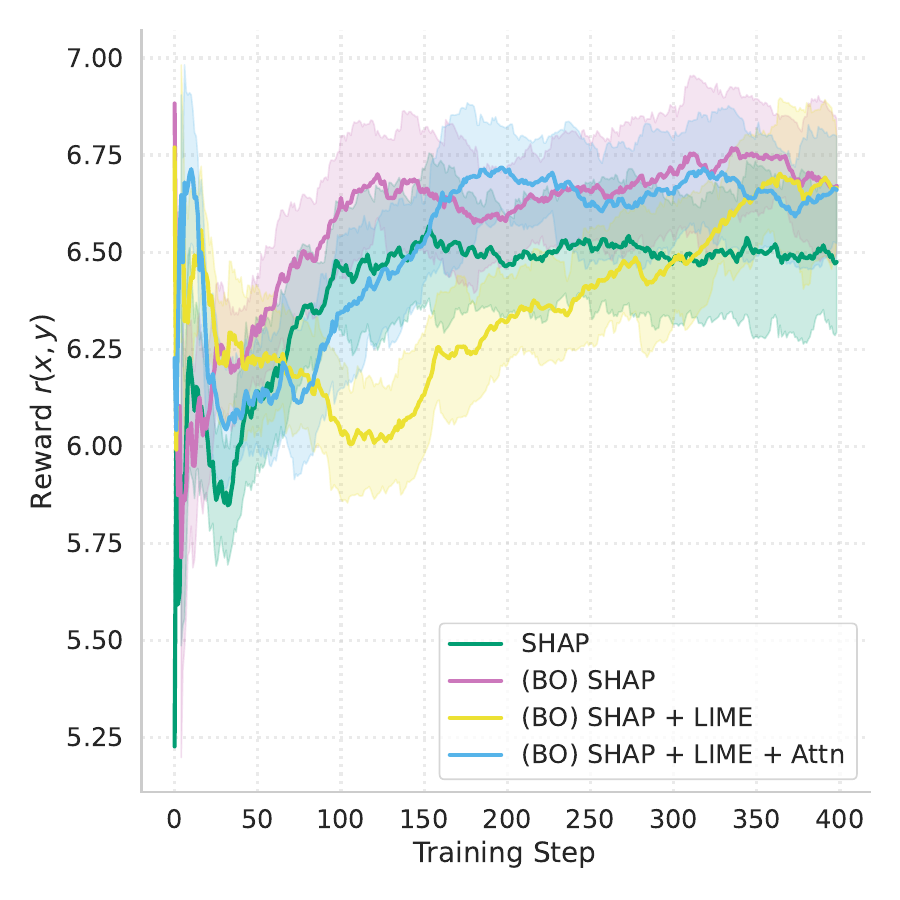}}
\hfill
\subfigure[Validation reward after each BO trial.]
{\label{fig:4b}\includegraphics[width=0.3\linewidth
                , trim={0.0cm 0.0cm 0.0cm 0.0cm}
                , clip
            ]{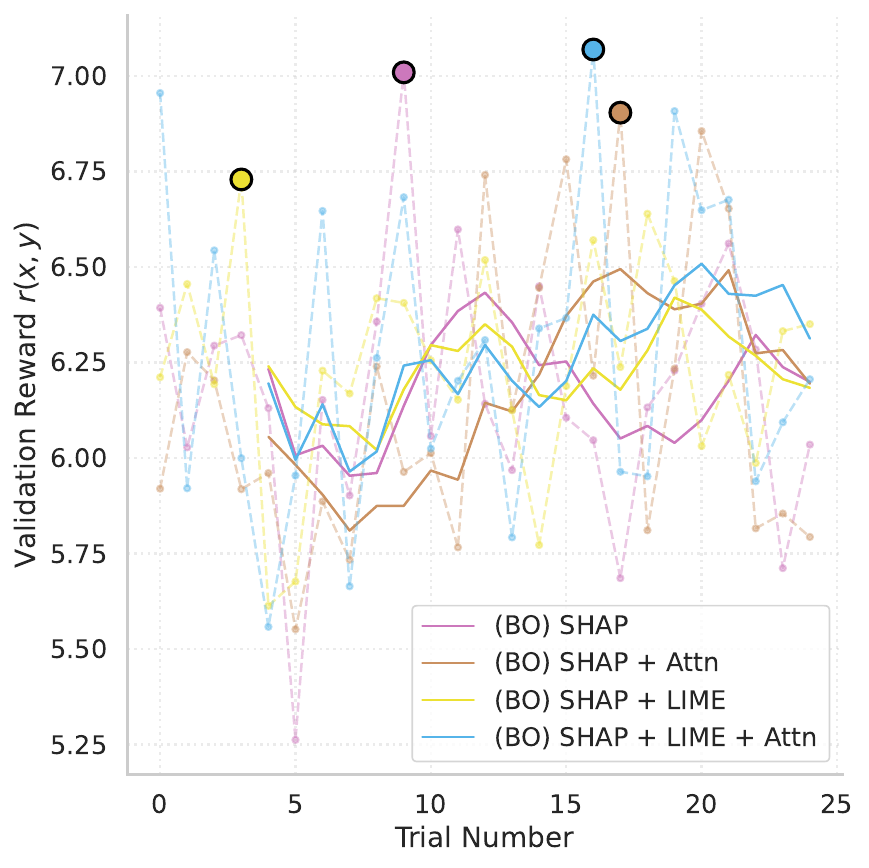}}
\hfill
\subfigure[Mean dense reward shape]
{\label{fig:4c}\includegraphics[width=0.3\linewidth
                , trim={0.0cm 0.0cm 0.0cm 0.0cm}
                , clip
            ]{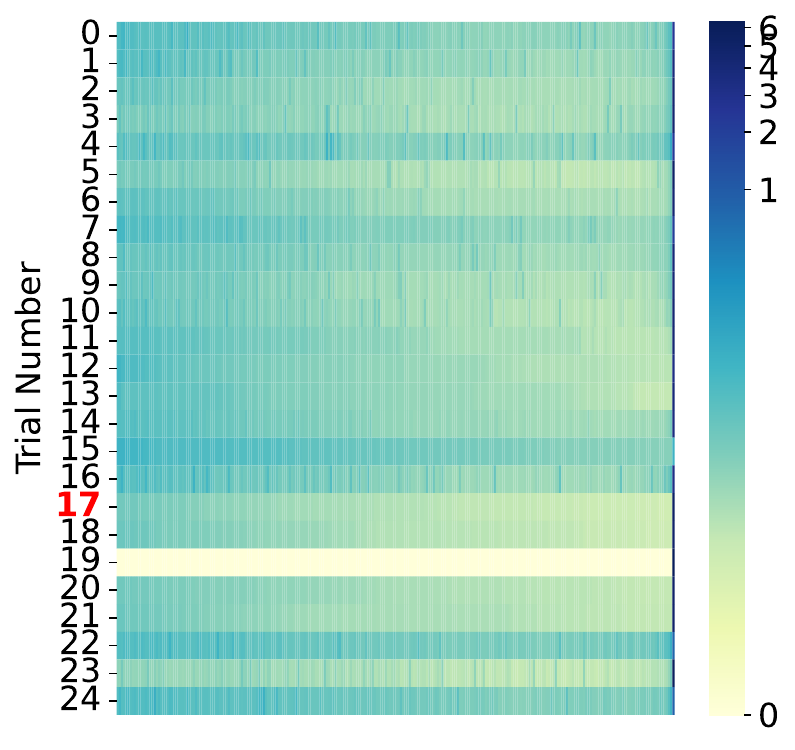}}
\caption{\textbf{(Left)} The mean training reward per timestep with increasing BO dimensionality. \textbf{(Middle)} The mean validation reward over each BO trial. The highlighted dots indicate the best validation reward received at trial $n$. \textbf{(Right)} The average dense reward attribution over each trial for SHAP + ATTN. The highlighted row indicates the shape in trial 17 that received the highest validation reward.
% \vr{do we plan to present shap-only, attn-only etc. somewhere for completeness?}\textbf{rk, do you mean in Figure 3 or something else?}
}
\end{figure}
\vspace{-0.2cm}

\begin{wrapfigure}{R}{0.6\textwidth}
% \begin{figure}[t]
\vspace{-8mm}
\centering 
\subfigure[Explanation-based rewards training curves vs. baselines]
{\label{fig:3a}\includegraphics[width=0.48\linewidth
                , trim={0.0cm 0.0cm 0.0cm 0.0cm}
                , clip
            ]{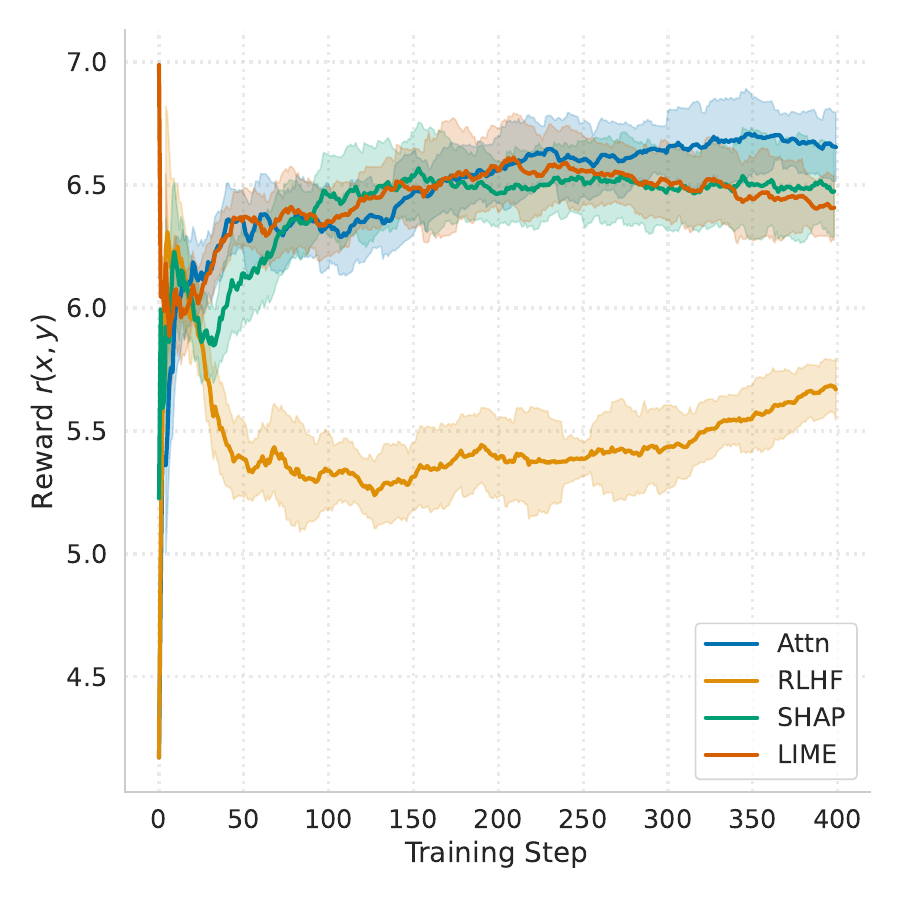}}
% \hfill
\subfigure[PPO Value loss]
{\label{fig:3b}\includegraphics[width=0.48\linewidth
                , trim={0.0cm 0.0cm 0.0cm 0.0cm}
                , clip
            ]{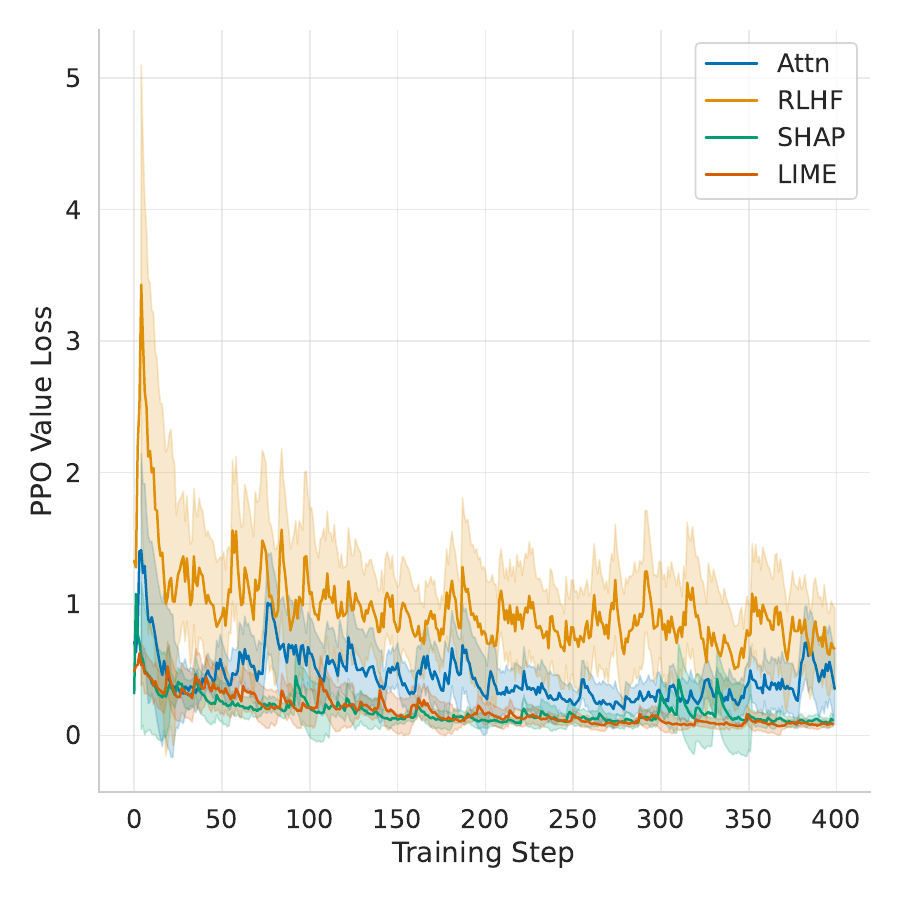}} \vspace{-3mm}
\caption{
\textbf{Helpfulness}. \textbf{(Left)} The average training reward per timestep. \textbf{(Right)} The average value head loss per timestep. The shading represents the standard error ($95\%$ confidence interval) as training progresses.
\vspace{-5mm}
% \textbf{(Right)} LC Winrate against the SFT baseline per data split in the AlpacaEval2 setup.
}
\end{wrapfigure}

\subsection{How well do explanation-based rewards optimize the RLHF objective?}
We first analyze the impact of explanation-based rewards in optimizing the vanilla RLHF objective. First, we do not expect to see any significant improvements from explanation methods alone. As we formulate explanation scores as a potential-based reward shaping transformation, we are not guaranteed to find better local optima \cite{ng1999-policy-invariance}, but a carefully chosen reward shape could improve the learning complexity of reinforcement learning \citep{fu2025rewardshapingmitigatereward, gupta2022unpackingrewardshapingunderstanding}. Foremost, we are interested in whether explainability can provide any meaningful improvements or optimizations over simply considering the sparse, scalar reward. 

\paragraph{Credit assignment with explainability helps exploration} Before applying any Bayesian optimization, we manually select a weighting combination, placing higher emphasis on the token-level scores with $w = 0.8$ following \cite{chan2024denserewardfreereinforcement}. Overall, explanation-based methods achieve a high average reward relatively early during training that is competitive with the attention-based baseline observed in Figure \ref{fig:3a}. Furthermore, explanation-based methods achieve more stable updates than sparse rewards from Figure \ref{fig:3b}. In particular, the dense rewards from explainability drastically reduce the PPO value head loss, signaling that we have a good approximation of states to long-term returns for more stable policy updates during training.

\paragraph{Explanation-based rewards avoid reward overfitting} One of the main risks associated in reinforcement learning from human feedback is reward overfitting. This is particularly undesirable since the reward model is a proxy for human preferences, and overoptimizing its value can hinder ground truth performance \citep{gao2022scalinglawsrewardmodel}. To study this relationship, we observe the performance of baselines and our methods on open benchmarks for instruction-following and multi-turn dialogue capabilities, which contain data samples not strictly in distribution to our training datasets. From Table \ref{tab:main_results}, we see that explanation-based methods do not substantially outperform baselines on the HH-RLHF test split; however, they achieve a markedly higher win rate on open benchmarks, indicating that we are not particularly overfitting to the reward function and also maintaining general performance. Notably, we see the sparse reward approach underperforms relative to the SFT model on AlpacaEval2 for the HH-RLHF models. We suspect this stems from overfitting on the reward function, which can degrade performance on data splits not directly related to the helpfulness split. To verify this, we examine the win rate against the SFT model only the helpfulness split of AlpacaEval2 and find the win rate improves, being preferred $56.34\% \pm4.03$ of the time. 

\begin{figure}[t]
\centering
{\includegraphics[width=\linewidth
                , trim={0.0cm 3.5cm 0.0cm 3.5cm}
                , clip
            ]{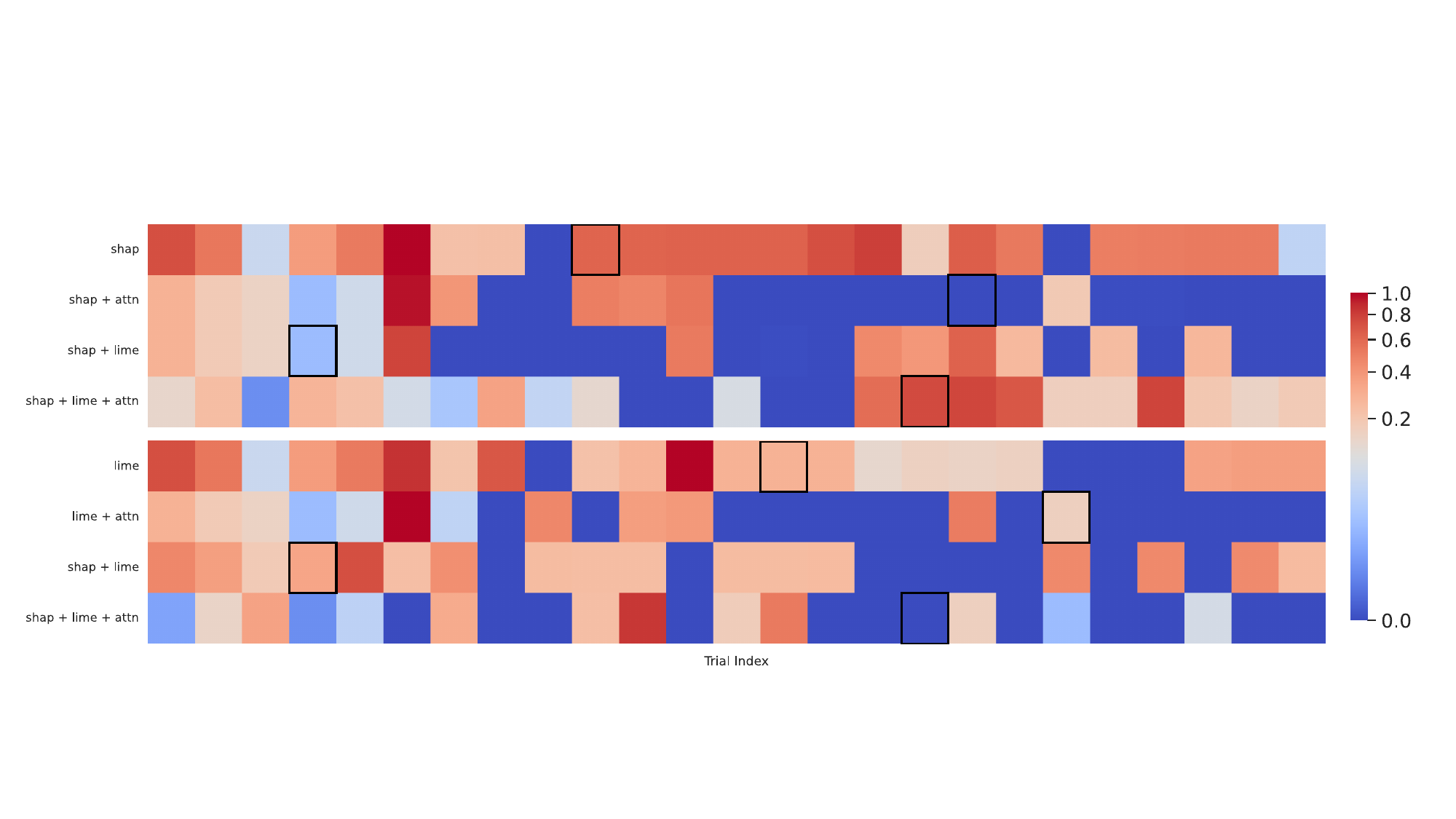}}
\vspace{-0.4cm}
\caption{\textbf{(Top)} The weight transition between trials for SHAPley scores. \textbf{(Bottom)} The weight transition between trials for LIME scores. The black boxes indicate the "best weights" sampled by the BO model.}
\label{fig:weight_distribution}
\end{figure}

\subsection{Does Bayesian optimization help balance token-level rewards better?}

Next, we evaluate the impact of Bayesian Optimization (BO) in sampling optimal weights to construct a new, dense reward function based on token-level interpretability scores. We consider several combinations of token-level scores derived from SHAP, LIME, and the attention map of the reward model. Specifically, our goal is to assess whether BO can effectively weigh these different sources of token-level importance to produce a more informative reward signal during policy optimization.

However, due to the limited number of trials available, we acknowledge that convergence of BO is not guaranteed, particularly given the high-dimensional and potentially non-convex search space of reward weight combinations \citep{loeppky2012choosing, snoek2012practicalbayesianoptimizationmachine}. The challenge of balancing exploration and exploitation in such a setting further complicates the optimization process, especially when evaluating noisy, non-stationary reward surfaces. Despite these limitations, Table \ref{fig:method} and Figure \ref{fig:4a} show that the weights sampled by BO generally lead to improved downstream task performance compared to baselines and that models guided by BO-optimized dense rewards tend to achieve even higher training rewards. 

\paragraph{Explore-Exploit Tradeoff} To better understand BO's behavior, we analyze the evolution of weight distributions and dense reward shape across trials in Figure \ref{fig:weight_distribution} and \ref{fig:4c}. In particular, we highlight that the final combination incorporating all scores with an input dimension of $d=4$ does not outperform simpler combinations. In particular, this is expected as by increasing the search space complexity, we should also increase the number of trials; however, due to the computation constraints, we keep the number of trials constant.

Ideally, if BO had successfully identified optimal weights, we would expect to see a lower performance bound on any sub-combination of scores; that is, more scores should not hurt performance if adequately weighted. However, the observed degradation in performance for $d=4$ implies that BO may not have sufficiently explored or exploited the reward space, potentially due to early convergence to suboptimal regions. Across all trials in Figure \ref{fig:weight_distribution}, we see that after Sobol sampling from the first five trials, BO tends to consistently switch between exploration and exploitation, suggesting that just more exploration is needed.

\begin{figure}[t]
\centering
\includegraphics[width=\linewidth
                , trim={0.0cm 0.0cm 0.0cm 0.0cm}
                , clip
            ]{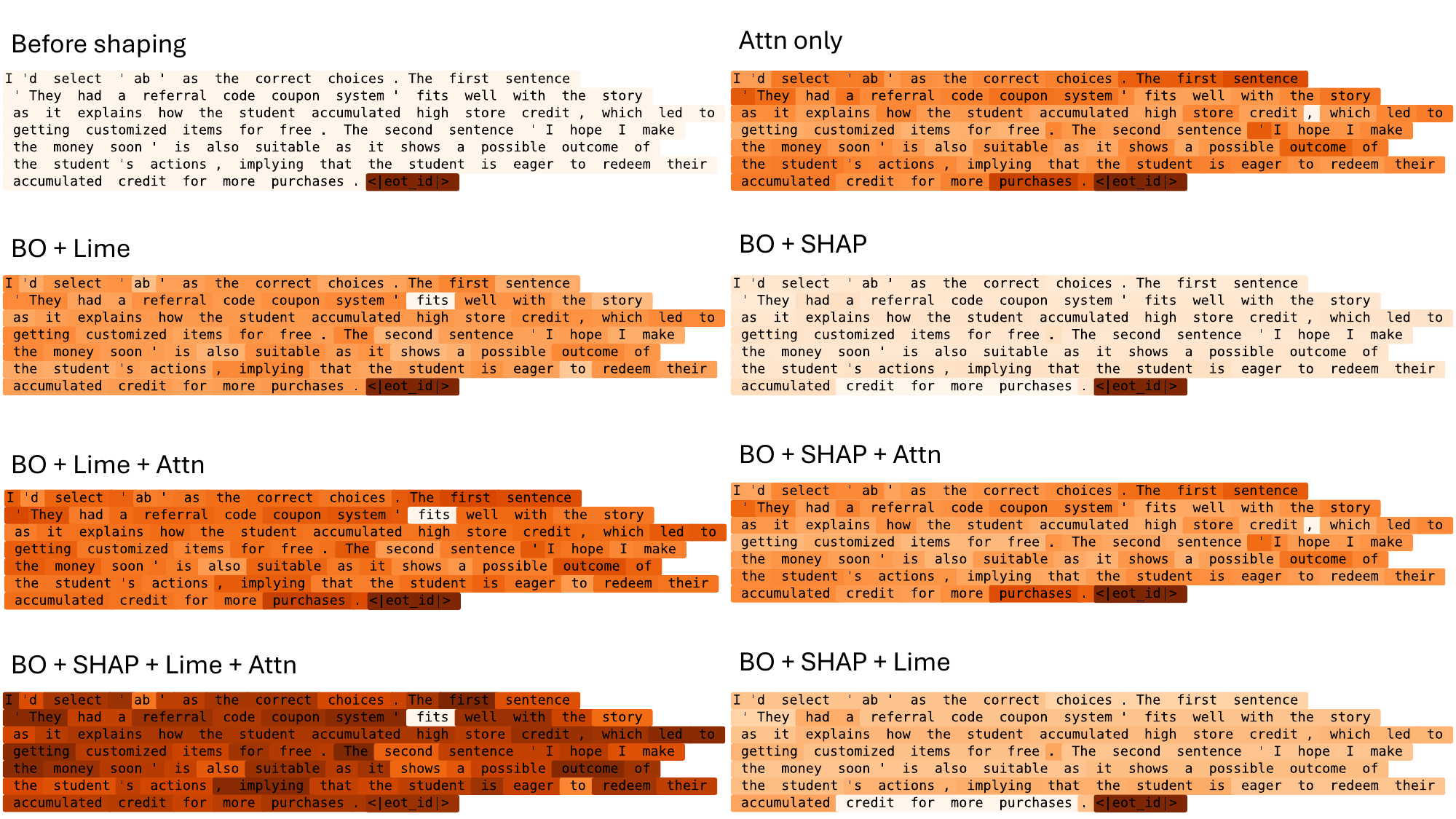}
\vspace{-0.5cm}
\caption{\textbf{(Ultrafeedback)} The top left represents the baseline per-token reward without shaping. The color of each token represents the reward received, with a darker color representing a higher proportion of the reward assigned. A more uniform coloring indicates a more uniform assignment of the scalar reward to each token, while a contrasting light/dark coloring indicates a more skewed assignment.}
\label{fig:shaping_heat}
\vspace{-0.5cm}
\end{figure}

\paragraph{BO and Credit Assignment} Additionally, we inspect the before and after of the reward shape on a representative example from the Ultrafeedback \citep{cui2024ultrafeedbackboostinglanguagemodels} dataset, where the model is asked to choose two sentences out of three that seamlessly connect with an unfinished story. Here, in Figure \ref{fig:shaping_heat}, we show several variants of the shaping method on a completion from the SFT model where the tokens are colored proportionally to the reward they received after computing the shaping weights. In particular, almost all shaping methods still place importance on the terminating token. We suspect that since the reward model is trained to backpropogate on the terminal token, we cannot ignore it in order to maximize the reward function best. Specifically, we see the combination of SHAP + LIME, which highlights the explanation of the model's rationale for choosing the first and second sentences. We similarly see this for the $d=4$ combination with all token-level scores. However, we see a more uniform assignment of scores as the scalar reward was given zero weight in the final shaped score. 
% compared to a more skewed assignment of the reward over the tokens. 

\paragraph{Limitations of Current Explainability Methods}
While Bayesian Optimization (BO) can leverage complementary effects from multiple explainability signals, individual mechanistic interpretability methods, such as a model’s intrinsic attention, remain fundamentally suboptimal.
As shown in Figure \ref{fig:shaping_heat}, these methods often fail to align precisely with human perception of relevant features \cite{bereska2024mechanistic}. A promising direction for future research is to incorporate more fine-grained, human-aligned signals, potentially sourced from explicit token-level annotations \cite{hayati2021does} or cognitive indicators like eye movements when reading text \cite{de2022comparative}.

% We find that as the number of token-level input dimensions increases, BO exhibits more exploratory behavior. In contrast, with lower-dimensional combinations (e.g., SHAP-only, LIME-only, or sparse rewards), BO tends to identify promising weights early on and exploit them consistently across later trials.

% This behavior aligns with the expected dynamics of BO in low-dimensional spaces, where initial convergence is more likely, and the cost of exploration is lower \cite{bergstra2011algorithms, frazier2018tutorial}.

\section{Related Work}
\textbf{Reward Shaping} is a technique in reinforcement learning (RL) that supplements traditionally sparse rewards with more informative reward signals \citep{hurewardshapinga}. Early work in potential-based reward shaping (PBRS) demonstrates that reward shaping can preserve the optimal policy while reducing training time or introducing domain knowledge \citep{hurewardshapinga, caorewardshapingc}. However, challenges still arise because it is often difficult to guarantee that the additional reward is helpful for any given task \citep{hurewardshapinga}. 
% Newer methods of reward shaping have focused on adaptability to overcome limitations. 
Additionally, we note that we are not the first to inspect reward shaping or RLHF under a bi-level optimization lens. For example, bi-level optimization of parameterized reward shaping (BiPaRS) \citep{hurewardshapinga} adaptively utilizes a given reward function, optimizing both the policy itself, as well as the shaping weight for the reward function. \cite{shen2024principledpenaltybasedmethodsbilevel} Also inspects RLHF as a bilevel formulation under a penalty function. Other reward shaping methods focus on adaptability to overcome limitations, such as AlphaPO \citep{guptarewardshapingb} builds on simple policy optimization (SimPO) by introducing a new scalar parameter $\alpha$, demonstrating that the shape of the reward function can alter the likelihood displacement. 

\textbf{Token-Level RLHF} Other methods aim to improve reward signals by distributing the reward at the token-level, hoping to capture more nuance in particular words or phrases \citep{yoontokenlevelc} or discourses \citep{kim2025align} in the generation ; examples include Token-Level Continuous Reward (TLCR), which uses a token-level preference discriminator \citep{yoontokenlevelc}, RLMEC which leverages minimum editing constraints to produce token-level supervision \citep{chentokenleveld}, and DRLC, which employs LLMs to identify dense positive and negative labelings within a response \citep{caotokenleveli}. Other work has also explored using LLMs themselves to provide the dense reward signal \citep{huangtokenlevelg}. A common theme with these works is training contrastively: an initial output is compared with a revised output, and the fine-grained reward signal is achieved by comparing the differences \citep{guotokenlevele}. Methods such as token-level PPO (TPPO) \citep{ouyangtokenlevelf} and token-level DPO (TDPO) \citep{zengtokenlevelb} further support the approach of token-level fine-tuning. 

\textbf{Bayesian Optimization} (BO) has also emerged to augment LLM training and fine-tuning \citep{austinboa, liubob, yangboc, gaobod, agarwalboe, kristiadibof, opsahlbog, cheboh}. BO functions by iteratively estimating the optimal posterior predictive distribution by adding new information/candidates \citep{agarwalboe}. Some recent work uses BO with LLMs to perform selection across a discrete search space \citep{kristiadibof, opsahlbog}. Other work investigates using BO to optimize prompts for LLMs, such as BOPRO \citep{agarwalboe}, InstructZero \citep{cheboh}, leveraging the Bayesian Optimization to explore the prompt search space. BO can also be leveraged in pre-training, by finding the optimal weighting for checkpoint merging \citep{liubob}.

BO can also be leveraged for uncertainty estimation \citep{yangboc}, bias mitigation (via win rate calibration) \citep{gaobod}, and guided query generation \citep{austinboa}. These tasks ultimately benefit from BO’s robustness to noise \citep{opsahlbog} by calibrating for uncertainty in the optimization process.

\section{Conclusion}

In this work, we show that explainability offers an effective way to reshape rewards for token-level credit assignments. Furthermore, our findings suggest that BO is a promising tool for learning and balancing different sources of token-level information for reward shaping. As a combination, we show that explainable dense reward shapes optimized under a Bayesian framework positively impact RL training compared to sparse rewards by accelerating learning and providing more stable updates to the value function. Furthermore, properly attributing tokens via interpretable methods also improves performance over the sparse reward on downstream tasks while also theoretically being policy invariant to stay faithful to the original reward model.   

In the future, we could consider incorporating contextual information into the BO weights to dynamically shape token-level rewards. For instance, we could learn a set of high-dimensional weights corresponding to the token-embeddings of a policy model's generation to dynamically compute the reward shape instead of the current statically weighted reward shape.

\section*{Acknowledgements}

This work was mainly supported by the research gift from Grammarly and UMN Data Science Institute (DSI) Seed Grant. We also thank Minnesota NLP group members for providing us with valuable feedback and comments on the initial draft.

\bibliography{colm2025_conference}
\bibliographystyle{colm2025_conference}

\appendix

\section{Computing the Explainability Scores from a LLM Reward Model}\label{appendix:a}

% \subsection{Toy example of SHAPley Values}
We discuss a simple setting of computing the SHAPley scores and also well-define its terms such as the simplified mapping $z'$, the mapping function $h_x$, and show by example the satisfaction of the faithfulness property: $g(z') \approx f(h_x(z'))$.

Consider an input text $x$: "I like apples" that receives a reward score $r(x) = 2.1$. Then we consider 3 tokens 

\begin{itemize}
    \item Token 1: I
    \item Token 2: like
    \item Token 3: apples
\end{itemize}

The binary vector $z' = [z_1', z_2', z_3']$ has each $z_i' \in{0,1}$ which masks each token as (1) present or (0) absent. The mapping function $h_x$ reconstructs valid text from the simplified input. For example, if $z' = [1, 0 ,1]$ then $h_x(z')$ might return "\texttt{I [MASK] apples}" and $r(h_x(z'))$ is the true reward for the masked text. We draw a table for each permutation and a toy example of the reward for each masked text

\begin{table}[h]
    \centering
    \begin{tabular}{ccc}
        \toprule
        $z'$ & Masked Text $h_x(z')$ & $r(h_x(z'))$ \\
        \midrule
        \texttt{[0,0,0]} & \texttt{[MASK] [MASK] [MASK]} & 0 \\
        \texttt{[1,0,0]} & \texttt{I [MASK] [MASK]} & 0.3 \\
        \texttt{[0,1,0]} & \texttt{[MASK] like [MASK]} & 0.5 \\
        \texttt{[0,0,1]} & \texttt{[MASK] [MASK] apples} & 1.2 \\
        \texttt{[1,1,0]} & \texttt{I like [MASK]} & 0.9 \\
        \texttt{[1,0,1]} & \texttt{I [MASK] apples} & 1.3 \\
        \texttt{[0,1,1]} & \texttt{[MASK] like apples} & 1.7 \\
        \texttt{[1,1,1]} & \texttt{I like apples} & 2.1 \\
        \bottomrule
    \end{tabular}
\end{table}

From this mapping, we can estimate token-level scores at intermediate states by computing the explainability metric or the feature attribution for each token to interpret each prediction of the reward model. Here, we consider the SHAPley kernel, which provides a natural way to identify the \textit{marginal} contribution of each token $i$ concerning the overall reward:
\begin{equation}\label{appendix:shap-kernel}
    \phi_i(f, x) = \sum_{s \subseteq x \backslash \{ i \}} \frac{|s|! (|x| - |s| - 1)!}{|x|!} [f(s \cup \{i\}) - f_x(s)]
\end{equation}
where $s$ is all subsets of tokens excluding token $i$ that also respect the order of the original sentence, and $x$ is the bag of all tokens in the sequence.

Here, we have $x = \{1,2,3\}$ as the bag of all tokens from the sequence, and $s$ constructs each subset of tokens but also maintains the order of the original sentence. Here, we can define the terms to properly solve Eq. 2 as a penalized linear regression:

\begin{align*}
    \Omega(g) &= 0 \\
    \pi_{x}(z') &= \frac{|z'|!(|x| - |z'| - 1)!}{|x|!} = \frac{|s|! (|x| - |s| - 1)!}{|x|!} \\
    \mathcal{L}(f, g, \pi_x) &= \sum_{z'\in Z} \left [ f(h_x(z')) -g(z') \right]^2 \pi_x(z')
\end{align*}

Now, in relation to the simplified input $z'$, we consider a subset $s$ with the token $\{2\}$ which corresponds to $z' = [0, 1, 0]$. Additionally $s \cup \{2\} = \{1, 2\}$ corresponds to $z' = [1, 1, 0].$ Hence, using Eq. 3 we can compute the feature attribution for each token $\phi_i$. We continue with an example computation of the feature attribution for the token "i".

\begin{example}[] We compute the marginal contribution of element \( i = 1 \) across all subsets \( s \subseteq x \setminus \{1\} \), using the Shapley value formula.

\textbf{Case 1:} \( s = \varnothing \)
\begin{align*}
\text{Weight:} \quad &\frac{|s|!(|x| - |s| - 1)!}{|x|!} 
= \frac{0! \cdot (3 - 0 - 1)!}{3!} 
= \frac{1 \cdot 2!}{6} 
= \frac{2}{6} 
= \frac{1}{3} \\
\text{Marginal:} \quad &f(s \cup \{1\}) - f(s) 
= f(\{1\}) - f(\varnothing) 
= 0.3 - 0 = 0.3 \\
\text{Contribution:} \quad &\frac{1}{3} \cdot 0.3 = 0.1
\end{align*}

\textbf{Case 2:} \( s = \{2\} \)
\begin{align*}
\text{Weight:} \quad &\frac{1! \cdot (3 - 1 - 1)!}{3!} 
= \frac{1 \cdot 1!}{6} 
= \frac{1}{6} \\
\text{Marginal:} \quad &f(\{1, 2\}) - f(\{2\}) 
= 0.9 - 0.5 = 0.4 \\
\text{Contribution:} \quad &\frac{1}{6} \cdot 0.4 
= 0.066\overline{6} \approx 0.067
\end{align*}

\textbf{Case 3:} \( s = \{3\} \)
\begin{align*}
\text{Weight:} \quad &\frac{1! \cdot (3 - 1 - 1)!}{3!} 
= \frac{1 \cdot 1!}{6} 
= \frac{1}{6} \\
\text{Marginal:} \quad &f(\{1, 3\}) - f(\{3\}) 
= 0.6 - 0.2 = 0.4 \\
\text{Contribution:} \quad &\frac{1}{6} \cdot 0.4 
= 0.067
\end{align*}

\textbf{Case 4:} \( s = \{2, 3\} \)
\begin{align*}
\text{Weight:} \quad &\frac{2! \cdot (3 - 2 - 1)!}{3!} 
= \frac{2 \cdot 0!}{6} 
= \frac{2}{6} 
= \frac{1}{3} \\
\text{Marginal:} \quad &f(\{1, 2, 3\}) - f(\{2, 3\}) 
= 1.0 - 0.7 = 0.3 \\
\text{Contribution:} \quad &\frac{1}{3} \cdot 0.3 
= 0.1
\end{align*}
\end{example}

Continuing this sequence gives eventually the rest of the terms in which summing them computes $\phi_1$ as:
\begin{align*}
    \phi_1 = 0.1 + 0.067 + 0.017 + 0.133 \approx 0.32 
\end{align*}

Continuing this with the rest of the SHAPley values gives eventually 
\begin{align*}
    \phi_1 = 0.32, \phi_2 = 0.62, \phi_3 = 1.17
\end{align*}
Hence, we see that the summation of all feature attributions give approximately the full reward as well as satisfying the additive surrogate 

\begin{equation*}
    g(z') = r(\varnothing) + \sum_{i=1}^3 \phi_i z' \approx r(h_x(z'))
\end{equation*}
 
In this example, it is important to note that we overview the standard SHAPley formula, which requires computing every partition of tokens, resulting in $2^k$ operations. However, in practice, we employ the Owen values, which provably produce the same solution as SHAPley values \citep{lopez2009owenvalues} and also reduce the necessary operations drastically, requiring to compute only $x^2$ (where x is the total sequence length) operations per prediction.

% \subsection{Toy example of LIME Values}

\section{Proof of Policy Invariance of Explainability Methods}\label{appendix:policy-invariance}

\begin{prop}
    Given a policy $\pi_\theta$ and a Markov Decision Process $\mathcal{M} = (\mathcal{S}, \mathcal{A}, \mathcal{P}, \gamma, R)$, any reward shaping function $R'$ in the family of additive feature attribution methods follows a potential-based shaping function and then the optimal $\pi_\theta$ for $R'$ is also optimal for the original $R$.
\end{prop}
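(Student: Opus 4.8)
The plan is to exhibit an explicit potential function $\Phi : \mathcal{S} \to \mathbb{R}$ so that the dense reward of Eq.~\eqref{eq:shaping} differs from the original terminal reward by a term of the potential-based form of \citet{ng1999-policy-invariance}, and then to invoke their policy-invariance theorem as a black box. Recall that a shaping reward $F(s,a,s')$ is \emph{potential-based} when $F(s,a,s') = \gamma\,\Phi(s') - \Phi(s)$ for some $\Phi$ depending only on the state, and that \citet{ng1999-policy-invariance} prove that replacing $R$ by $R' = R + F$ with such an $F$ leaves the set of optimal policies unchanged. Hence the proposition splits into two parts: first show that the attribution-based redistribution is potential-based, then quote the invariance theorem for the conclusion.

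For the first part I would use the additive structure $g(z') = \phi_0 + \sum_{i=1}^{M}\phi_i z_i'$ to define the per-token dense reward as $\tilde{\phi}_t = \mathrm{softmax}(\phi)_t \cdot r(x,y)$, exactly as in the broadcasting rule of Eq.~\eqref{eq:shaping}. Because the softmax weights sum to one, the redistribution preserves total reward exactly, $\sum_{t}\tilde{\phi}_t = r(x,y)$, independent of how faithful the attribution is. A transition $s_t \to s_{t+1}$ appends one token, so the natural candidate potential is the cumulative sum $\Phi(s_t) = \sum_{i \le t}\tilde{\phi}_i$. With $\gamma = 1$ (fixed in Section~3.1) the telescoping identity
\begin{equation*}
\gamma\,\Phi(s_{t+1}) - \Phi(s_t) = \Phi(s_{t+1}) - \Phi(s_t) = \tilde{\phi}_{t+1}
\end{equation*}
shows that assigning $\tilde{\phi}_{t+1}$ at each step is precisely a potential-based increment, and that the convex weighting $W$ (with $\sum_i w_i = 1$) only rescales $\Phi$, so $F = R' - R$ retains the form $\gamma\Phi(s')-\Phi(s)$ after the standard episodic convention of setting $\Phi$ to zero on the absorbing state. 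Given this, the second part is immediate: by \citet{ng1999-policy-invariance}, every optimal policy for $R'$ is optimal for $R$ and conversely.

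The step I expect to be the main obstacle is well-definedness of $\Phi$ as a function of the state alone. The attribution scores $\phi_i$, and hence the normalized $\tilde{\phi}_i$, are computed with respect to the \emph{completed} sequence, so a Shapley or LIME attribution for token $i \le t$ can in principle depend on future tokens $y_{t+1:T}$; since $s_t = \{x, y_{0:t}\}$ fixes only the prefix, this creates a path-dependence that the potential formalism forbids. I would resolve this either by restricting to the finite-horizon, realized-trajectory view in which a completed rollout fixes every $\tilde{\phi}_i$ and the telescoping holds path-wise on the visited states, or by adopting a causal attribution variant that conditions only on the generated prefix. A minor secondary point, the approximation in local accuracy ($g \approx f$), does not threaten total-reward preservation here because the softmax renormalization forces $\sum_t \tilde{\phi}_t = r(x,y)$ exactly, so it can be set aside for the invariance argument.
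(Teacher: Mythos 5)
Your proposal takes essentially the same route as the paper's Appendix B proof: both define the potential $\Phi(s_t)$ as the cumulative sum of the per-token attribution scores up to position $t$, observe that the per-step dense reward is exactly the telescoping difference $\Phi(s_{t+1}) - \Phi(s_t)$ with $\gamma = 1$, and then invoke the policy-invariance theorem \citep{ng1999-policy-invariance} as a black box. Your one substantive addition is the discussion of whether $\Phi$ is well-defined as a function of the state alone, since Shapley or LIME attributions of prefix tokens depend on the completed sequence; the paper's proof silently assumes this, so your path-wise (realized-trajectory) or causal-attribution fix addresses a gap the paper leaves open rather than deviating from its argument.
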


\begin{proof} 
    We first have $R'(s,a,s') = R(s,a ,s') + F(s,a,s')$ where $F$ is a potential-based shaping function defined as $F(s,a,s') = \gamma \Phi(s') -\Phi(s) $ for $\Phi : S \mapsto \mathbb{R}$. We also have by locality from Eq. 1 that $g(x') \approx R_\phi(s,a,s')$ or:
    \begin{equation}
        \sum_{i=1}^{s'} \phi_i(f, x) \approx R(s, a, s') 
    \end{equation}
    where $s'$ refers to the index up to state $s'$ in the sequence. Then, without loss of generality, consider the case with heuristics $\boldsymbol{\mathcal{E}} = \{ \phi_{s'}, 1\}$ where the constant is just the identity for the original sparse reward. We omit the case for the constant, as it is known that any linear transformation also preserves the optimal policy. Then, by substituting Eq. \ref{eq:shaping} for $F$ with $\gamma = 1$ since we care about all future states equally, we have:
    \begin{align}
        \Phi(s') - \Phi(s) = w_2 \phi_{s'}(R,x)  
    \end{align}
    Then defining $\Phi(s) = w_2 \sum\limits_{i=1}^{s} \phi_i(R, x)$ we have:
    \begin{equation}
        \begin{aligned}
            \Phi(s') - \Phi(s) &=  w_2 \sum_{i=1}^{s'} \phi_i(R, x) -  w_2 \sum_{i=1}^{s} \phi_i(R, x) =  w_2 \phi_{s'}(R,x) 
        \end{aligned}
    \end{equation}
    Since by Eq. 8 we have that 
    \begin{equation*}
        \sum_{i=1}^{s'} \phi_i(R,x) -\sum_{i=1}^s\phi_i(R,x) = \phi_{s'}(R,x)
    \end{equation*}
\end{proof}
\section{Computation Overhead of Explanation Methods}\label{appendix:wall-time}

To measure the computational overhead of explanation methods, we can represent the time complexity required for inference of the reward model as $\mathcal{O}(N \times F(k))$ where $F(k)$ represents the number of floating point operations required in one forward pass on sequence of length $k$, then our method is upper-bounded by $\mathcal{O}(N \times k^2 \times F(k))$ computations when also calculating SHAPley values. We run a simple test measuring the total wall clock time, varying batch size and generation length to run inference with and without explainability methods:

\begin{figure}[h]
    \centering
    \includegraphics[width=0.5\linewidth]{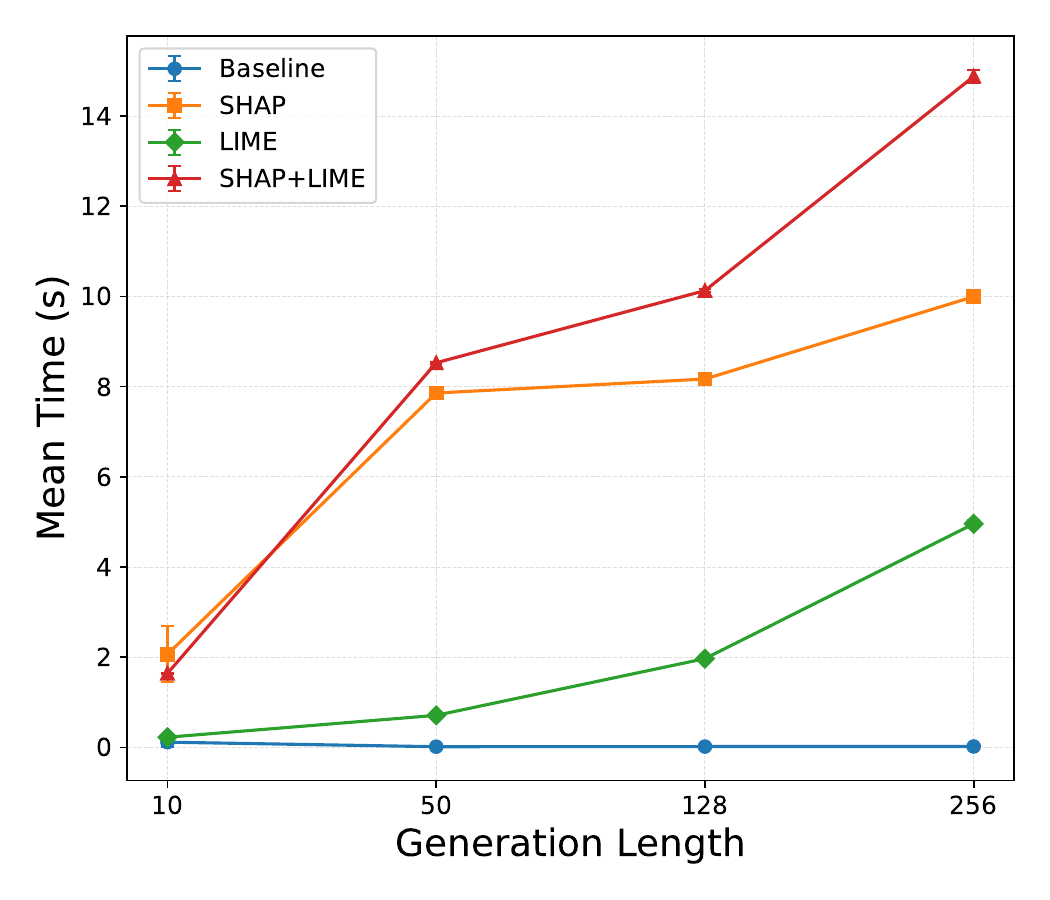}
    \label{fig:wall-clock}
\end{figure}

\newpage
\section{Practical Implementation Details}\label{appendix:impl-algorithm}

\begin{algorithm}[H] %// [tb]
\caption{Explainable Reward Shaping with Bayesian Optimization}
\label{alg:bo-ppo}
\begin{algorithmic}[1]
    \State \textbf{Initialize} a Gaussian Process (GP) surrogate model: $BO \gets \text{GP}$
    \For{$k = 1,\dots,n$} \Comment{Bayesian Optimization iterations}
        \State $w_k \gets BO.\text{next\_action()}$
        \For{$x_i \in \mathcal{D}_{1:n}$} \Comment{Subsampled set of points $\mathcal{D}$ and train PPO}
            \State $y_i \gets \pi_\theta(x_i)$
            \State $r(x_i,y_i);\;\boldsymbol{\mathcal{E}} \gets \mathcal{E}(r)$ \Comment{Compute sparse
            and token-level scores (Eq. ~2)}
            \State $r' \gets w_k^\top \boldsymbol{\mathcal{E}}$
            \State Update $\theta$ via $\nabla_\theta J(\pi_\theta; r')$
        \EndFor
        \State $r^{val} \gets \frac{1}{m}\sum_{i=1}^{m} r\bigl(x_i,\pi_\theta(x_i)\bigr)$ \Comment{Compute avg. validation reward}
        \State $BO.\text{update}(w_k, r^{val})$
        \If{$r^{val} > r^{best}$}\State Update the PPO checkpoint for next BO iteration
        \EndIf
    \EndFor
    \State $w_{\text{best}} \gets BO.\text{best\_params()}$ \Comment{Retrieve the best parameters}
    \State \textbf{Train new PPO} using $r' \gets w_{\text{best}}^\top \boldsymbol{\mathcal{E}}$
\end{algorithmic}
\end{algorithm}

\begin{table}[h]
\centering
\begin{tabular}{|l|l|}
\hline
\textbf{Component} & \textbf{Link} \\
\hline
Dataset & \href{https://huggingface.co/datasets/Anthropic/hh-rlhf}{Anthropic/hh-rlhf} \\
\hline
Reward Model Base & \href{https://huggingface.co/weqweasdas/hh_rlhf_rm_open_llama_3b}{weqweasdas/hh\_rlhf\_rm\_open\_llama\_3b} \\
\hline
SFT Model Base & \href{https://huggingface.co/VMware/open-llama-7b-open-instruct}{VMware/open-llama-7b-open-instruct} \\
\hline
\end{tabular}
\end{table}

\begin{table}[h]
\centering
\begin{tabular}{|l|l|}
\hline
\textbf{Component} & \textbf{Link} \\
\hline
Dataset & \href{https://huggingface.co/datasets/princeton-nlp/llama3-ultrafeedback}{princeton-nlp/llama3-ultrafeedback} \\
\hline
Reward Model Base & \href{https://huggingface.co/meta-llama/Llama-3.2-1B-Instruct}{meta-llama/Llama-3.2-1B-Instruct} \\
\hline
SFT Model Base & \href{https://huggingface.co/meta-llama/Llama-3.2-3B-Instruct}{meta-llama/Llama-3.2-3B-Instruct} \\
\hline
\end{tabular}
\end{table}

\clearpage
\section{Qualitative Examples}
\begin{table}[ht]
\renewcommand{\arraystretch}{1.4}
\centering
\begin{tabular}{|p{4.0cm}|p{8.0cm}|p{1.5cm}|}
\hline
\multicolumn{3}{|p{13.5cm}|}{\textbf{Prompt:} Can you describe the ways in which insomnia is treated?} \\
\hline
\textbf{Model} & \textbf{Generated Text} from HH-RLHF Test split & \textbf{Score} \\
\hline
\texttt{SHAP} (no BO) & 
The main treatments for insomnia are cognitive behavioral therapy (CBT), pharmaceutical drugs, and sleep hygiene practices. Cognitive behavioral therapy involves techniques such as sleep hygiene education, relaxation training, and cognitive restructuring, and is shown to be significantly more effective than medication or simple advice for short-term insomnia. Medication is generally used as a second-line treatment for chronic insomnia, and can include hypnotics such as benzodiazepines or non-benzodiazepine sedatives and inhibitors of the wake-promoting hormones such as histamine and melatonin. Sleep hygiene practices such as avoiding
 & \textbf{4.63} \\
\hline
\texttt{SHAP} $(d=2)$ & 
"Insomnia is typically treated with one of the three following methods:

\begin{enumerate}
    \item Medication
    \item Cognitive behavioral therapy (CBT)
    \item Sleep hygiene practices, such as regular sleep schedules and avoiding heavy meals before bedtime.
\end{enumerate}
Medications like benzodiazepines and selective serotonin reuptake inhibitors (SSRIs) can help with sleep problems. 
Cognitive behavioral therapy can be helpful in changing maladaptive thoughts and behaviors around sleep, such as staying up too late, which prolongs the time spent awake. It can also help teach sleep-promoting" & \textbf{6.5} \\
\hline
\texttt{SHAP + ATTN} $(d=3)$ & 
Insomnia is treated based on the underlying cause. The most common treatments include medications such as Benzodiazepines (e.g. triazolam, temazepam, alprazolam), Z-drugs (e.g. doxepin, zolpidem), or Minor tranquillizers (e.g. buspirone). Cognitive behavioral therapy (CBT) and relaxation techniques are also common treatments.
\texttt{</s>} & \textbf{8.63} \\
\hline
\texttt{SHAP + LIME} $(d=3)$ & 
Treatment for insomnia includes changes in lifestyle and environmental factors, as well as medications and other therapies. Common treatments include cognitive-behavioural therapy, which focuses on identifying and changing thought patterns and other behavioural factors that may contribute to insomnia, and medication that may help regulate the sleep cycle. Other treatments may include relaxation techniques, such as meditation or breathing exercises, which can help calm the mind and promote sleep.
 \texttt{</s>} & \textbf{9.31} \\
\hline
\end{tabular}
\caption{Generation topography across different models as input dimension increases for BO.}
\end{table}

\end{document}